\documentclass{article}

\usepackage{fullpage}
\usepackage[pass]{geometry}
\usepackage{hyperref}       

\usepackage{burt}
\usepackage{graphicx}
\usepackage{enumitem}

\usepackage{url}            
\usepackage{booktabs}       
\usepackage{amsfonts}       
\usepackage{nicefrac}       
\usepackage{microtype}      
\usepackage{natbib}
\Crefname{prop}{Proposition}{Propositions}
\usepackage{bbm}

\begin{document}
\setcitestyle{authoryear}
\title{Variational Orthogonal Features}

\author{%
\begin{tabular}{c c c}
    David R. Burt & Carl Edward Rasmussen &Mark van der Wilk\\
     University of Cambridge& University of Cambridge &Imperial College London\\
     \texttt{drb62@cam.ac.uk} & \texttt{cer54@cam.ac.uk} &\texttt{m.vdwilk@imperial.ac.uk}
\end{tabular}
}
\maketitle

\begin{abstract}
\noindent Sparse stochastic variational inference allows Gaussian process models to be applied to large datasets. The per iteration computational cost of inference with this method is $\mathcal{O}(\tilde{N}M^2+M^3),$ where $\tilde{N}$ is the number of points in a minibatch and $M$ is the number of `inducing features', which determine the expressiveness of the variational family. Several recent works have shown that for certain priors, features can be defined that remove the $\mathcal{O}(M^3)$ cost of computing a minibatch estimate of an evidence lower bound (ELBO). This represents a significant computational savings when $M\gg \tilde{N}$. We present a construction of features for any stationary prior kernel that allow for computation of an unbiased estimator to the ELBO using $T$ Monte Carlo samples in $\mathcal{O}(\tilde{N}T+M^2T)$ and in $\mathcal{O}(\tilde{N}T+MT)$ with an additional approximation. We analyze the impact of this additional approximation on inference quality.
\end{abstract}

\section{Introduction}
Gaussian processes (GPs) are commonly used as priors over functions in Bayesian non-parametric models. The posteriors of these models are expressive and reflect uncertainty in regions with little data. In the case of regression with Gaussian likelihood, the log marginal likelihood (LML) of these models can be computed analytically in $\mcO(N^3),$ with memory $\mcO(N^2),$ where $N$ is the number of training points. Inference is often performed by maximizing the LML with respect to model hyperparameters (i.e.~empirical Bayes). For applications involving large datasets, exact inference is infeasible due to the high memory and computational burden. `Sparse' methods, which summarize the posterior process using a small set of features can be used to improve scalability. Sparse methods can be formulated as a  variational inference problem \citep{titsias2009variational}, in which the goal is to find the sparse approximation closest to the full posterior as measured by the Kullback-Leibler (KL) divergence. \citet{burt2019rates} show that under reasonable assumptions, very sparse representations of the posterior can still lead to accurate approximations.

For particularly large datasets it is desirable to perform inference without needing a complete pass through the dataset for each hyperparameter update. \citet{hensman2013gaussian} proposed using stochastic variational inference (SVI) in sparse GP models, which allows for each iteration of inference to be performed in $\mcO(\tilde{N}M^2+M^3)$ with memory complexity $\mcO(\tilde{N}M+M^2),$ where $M$ is the number of \emph{inducing features} that determine the variational family, and $\tilde{N}$ is the size of a minibatch.  \citet{hensman2016variational} showed that the time complexity of SVI can be reduced to $\mcO(\tilde{N}M^2)$ for GPs with Mat\'{e}rn covariance functions with half-integer shape parameter, by choosing a set of features that lead to structured covariance matrices. These features have a diagonal plus low-rank feature covariance matrix, $\Kuu$. In this work, we construct a large family of features that lead to a diagonal $\Kuu$ and are applicable to inference with \emph{any} prior with stationary kernel.

In \cref{sec:background}, we review SVI in GP models, as well as several results our method relies on. In \cref{sec:mainVOF}, we construct a new family of features, \emph{variational orthogonal features} (VOF), and describe their properties. In cases where the ELBO can be evaluated analytically, VOF have per iteration computational complexity $O(\tilde{N}M^2)$. With an additional mean-field approximation that we show in some cases can be made without loss of approximation quality, this complexity can be reduced to $O(\tilde{N}M)$. Alternatively, or in cases when the ELBO cannot be evaluated analytically, we can use Monte Carlo (MC) methods to obtain an unbiased estimator of the ELBO in $\mcO(\tilde{N}T+M^2T)$, with $T$ being the number of samples used for the MC estimate. The mean field approximation can also be applied in this context leading to a per iteration complexity of $\mcO(\tilde{N}T+MT)$.

\section{Background}\label{sec:background}
Throughout this work, we consider the problem of inference in a Bayesian model with data $\mcD=\{\bfx_n,y_n\}_{n=1}^N,$ $\bfx_n \in \R^D, y_n \in \R.$ We take a zero mean GP prior over mappings from $\R^D \to \R$, with covariance function $k: \R^D \times \R^D \to \R$ and a factorized likelihood, i.e.,
\begin{equation}\label{eqn:model}
f \sim \GP(0, k(\cdot,\cdot)) \text{, \,\,} p(\bfy|f(\bfX))=\prod_{n=1}^N p(y_n \vert f(\bfx_n)),
\end{equation}
where $\bfX = (\bfx_n)_{n=1}^N$, $\bfy= (y_n)_{n=1}^N$, $f(\bfX)=(f(\bfx_n))_{n=1}^N$ and $p(\cdot | f(\bfx_n))$ is the specified by the choice of likelihood.
\paragraph{Sparse Variational Inference in GP Models}
Variational GP methods define an approximate posterior process and minimize the KL divergence between this approximation and the full process. Following earlier sparse GP methods \citep{seeger-fast03, snelson2006sparse}, \citet{titsias2009variational} proposed choosing a set of inducing points $Z=\{\bfz_m\}_{m=1}^M,$ along with corresponding values of the process at these points $\{u_m=f(\bfz_m)\}_{m=1}^M.$ A Gaussian distribution is placed over these points, $q(\bfu) \sim \mcN(\bfm,\bfS),$ and the approximate posterior process is a Gaussian process with mean and covariance functions given by,
\begin{equation}
 \mu_q(\bfx_*) = \bfk_{\bfx_*\bfu}\Kuu^{-1}\bfm \text{\quad and \quad} k_q(\bfx_*,\bfx_*') = k(\bfx_*,\bfx_*') + \bfk_{\bfx_*\bfu}\Kuu^{-1}\left(\bfS - \Kuu\right)\Kuu^{-1}\bfk_{\bfu\bfx_*'} \label{eq:qf}
\end{equation}
where $[\Kuu]_{m,m'}=\cov(u_m,u_{m'})$ and $[\bfk_{\bfu \bfx_*}]_{m}=\cov(u_m, f(\bfx_*)).$ \citet{titsias2009variational} analytically found the optimal variational distribution for a given $Z.$ \citet{hensman2013gaussian, hensman2015scalable} proposed treating $\bfm$ and $\bfS$ explicitly as variational parameters, allowing for minibatches to be used during optimization, as well as inference with non-conjugate likelihoods. This approach gives the evidence lower bound (ELBO):
\begin{align}\label{eqn:ELBO}
\mcL \coloneqq  \sum_{n=1}^N \Exp{q(f_n)}{\log p(y_n \vert f_n )}-\KL{q(\bfu)}{p(\bfu)} \leq \log p(\bfy).
\end{align}
where $f_n=f(\bfx_n)$. Each term in the sum on the RHS can be computed analytically in the case of a Gaussian likelihood, and estimated to high precision via Gauss-Hermite quadrature in the case of a general factorized likelihood. Further, the sum can be approximated via subsampling minibatches of data. The second term on the RHS is analytic. The computational cost of computing an unbiased estimate of \cref{eqn:ELBO} is dominated by calculating the marginal mean and variance of $q(\bff)$ (\cref{eq:qf}) in order to estimate the first term. This is commonly implemented with a Cholesky decomposition of $\Kuu$, which is $\mcO(M^3)$, and a back-solving operation, which is $\mcO(\tilde{N}M^2)$. Given a Cholesky factor of $\Kuu$ the KL-term can be computed in $\mcO(M^2)$.

\paragraph{Interdomain Inducing Features}
Interdomain inducing features, \citep{NIPS2009_3876} generalize the notion of inducing points to linear transformations of the original process. For some collection of integrable functions $\{g_m\}_{m=1}^M$, define $u_m = \int_{\R^D} g_m(\bfx) f(\bfx) d\bfx.$ As before, we form a variational posterior of the form given in \cref{eq:qf}.

In the special case of Mat\'{e}rn half-integer kernels, \citet{hensman2016variational} defined \emph{Variational Fourier Features} (VFF). These features are defined in such a way that $\cov(u_m,f(x))= \cos(mx)$ or $\sin(mx)$, independent of the kernel hyperparameters. In the case of inference with a Gaussian likelihood using VFF, after an initial cost of $\mcO(NM^2)$ to accumulate statistics of the data, each iteration of hyperparameter optimization can be performed in $\mcO(M^3)$. In the case of other likelihoods, or when minibactching is used, VFF results in a low-rank plus diagonal structure for $\Kuu$, which makes the per-iteration cost of SVI $\mcO(\tilde{N}M^2)$ as opposed to the $\mcO(\tilde{N}M^2+M^3)$ per iteration cost with inducing points. It is this latter computational savings we seek to make more generally applicable in this work.

\paragraph{Monte Carlo Estimation of the ELBO}\label{sec:MCELBO}
In this paper, we will define features in such a way that we can evaluate the covariance matrix $\Kuu$, but only have access to Monte Carlo estimates of the cross covariance matrix $\Kuf$. Unbiased estimators of $\Kuf$ can be used to obtain an unbiased estimate of the ELBO (\cref{eqn:ELBO}) in the case of conjugate GP regression \citep{vanderWilk2018}. In the case where the likelihood is Gaussian with variance $\sigma^2$, \cref{eqn:ELBO} becomes
\begin{align*}
    \mcL = \sum_{n=1}^N&\left(-\log(2\pi \sigma)^2 - \frac{1}{2\sigma^2}(y_n^2-2y_n \mu_n + \mu_n^2 +\sigma_n^2)\right) - \KL{q(\bfu)}{p(\bfu)},
\end{align*}

where $\mu_n = \bfK_{\bfu, \mathrm{f_n}}\transpose\Kuu^{-1}\bfm$ and $\sigma_n^2 = k(\bfx_n,\bfx_n)+ \bfK_{\bfu, \mathrm{f_n}}\transpose\Kuu^{-1}(\bfS-\Kuu)\Kuu^{-1}\bfK_{\bfu, \mathrm{f_n}}$.
\Citet{vanderWilk2018} note that unbiased estimators for $\mu_n,$ $\mu^2_n$ and $\sigma^2_n$ are sufficient for performing inference. We employ this approach in this paper. Using P\'olya-Gamma variables \citep{polson2013bayesian}, this approach can be extended to the classification setting. However, we focus on conjugate regression to illustrate the ideas in this paper.   
\section{Variational Orthogonal Features}\label{sec:mainVOF}
We would like to obtain some of the computational benefits of VFF applied to stochastic variational inference (\cref{eqn:ELBO}) for a larger class of kernels. The first computational bottleneck we consider is inverting $\Kuu.$ To solve this, we construct features that can be applied to \emph{any} stationary kernel, so that $\Kuu$ is diagonal. Consider the entries of $\Kuu$ for interdomain features defined by $u_m = \int_{\R^D} g_m(\bfx) f(\bfx) d\bfx$,
 \begin{align}
     [\Kuu]_{m,m'}  = \cov(u_m, u_{m'}) \nonumber&= \Exp{}{\int_{\R^D} g_m(\bfx)f(\bfx) d\bfx\int_{\R^D} \overline{g_{m'}(\bfx')}f(\bfx') d\bfx'} \nonumber \\
     &= \int_{\R^D}g_m(\bfx)\int_{\R^D} \overline{g_{m'}(\bfx')}k(\bfx,\bfx') d\bfx  d\bfx'. \label{eqn:covuintermediate}
\end{align}
This nearly factors into two separate integrals; the only term depending on both $\bfx$ and $\bfx'$ is $k(\bfx,\bfx').$
Bochner's theorem (e.g.\citet[Theorem 4.1]{Rasmussen2005gpml}) tells us that for a stationary kernel there exists a non-negative integrable function $s: \R^D \to [0,\infty)$, the \emph{spectral density} of $k$, such that
\begin{equation}
    k(\bfx,\bfx') \eqqcolon \kappa(\bfx-\bfx') = (2\pi)^{-D/2}\int_{\R^D}\! e^{-i \omega \cdot (\bfx-\bfx')}s(\omega) d\omega. \label{eqn:bochner}
\end{equation}

\noindent This theorem motivates several spectral approximations to the covariance matrix, notably Random Fourier Features \citep{rahimi2008random}. We apply \cref{eqn:bochner} to expand $k(\bfx,\bfx')$ in \cref{eqn:covuintermediate}, so that it separates as:
\begin{align}
     \!\cov(u_m, u_n) \!=\!(2\pi)^{\frac{D}{2}}\int_{\R^D}\int_{\R^D}\!g_m(\bfx)e^{-i\omega \cdot \bfx}\frac{d\bfx}{{(2\pi)^{\frac{D}{2}}}} \overline{\int_{\R^D}g_{m'}(\bfx)e^{-i\omega \cdot \bfx'}\frac{d\bfx'}{{(2\pi)^{\frac{D}{2}}}}}s(\omega)d\omega. \label{eqn:covu}
\end{align}
The inner integrals are $\mcF[g_m](\omega)$ and $\mcF[g_{m'}](\omega)$, where $\mcF$ denotes the Fourier transform. The outer integral is an inner product between these transforms, over the space $L^2(\R^D; (2\pi)^{D/2}s)$ (i.e. $L^2$ equipped with a measure with density proportional to $s$). \Cref{eqn:covu} has analogues in the RKHS literature, and can be seen as using an isometry from the RKHS with kernel $k$ into $L^2$, see \citet[Theorem 10.12]{wendland_2004}.

By applying Fourier inversion in \cref{eqn:covu}, we can translate an orthogonal basis of functions in $L^2(\R^D)$ into a set of orthogonal features. In particular, we consider a collection of square-integrable functions $\{\psi_m\}_{m=1}^M$ that are  pairwise orthogonal. Subject to decay and regularity conditions on $\psi_m/\sqrt{s}$, outlined in \Cref{app:conditions}:
\begin{prop}\label{prop:ortho}
Let $f$ be a zero-mean Gaussian process indexed by $\R^D$ with a stationary kernel with spectral density $s$. Consider a set of features defined $\{u_m\}_{m=1}^M$, with $u_m= \int g_m(\bfx) f(\bfx)d\bfx$, with $g_m(\bfx) =\mathcal{F}^{-1}[\psi_m/\sqrt{s}](\bfx)$, with $\{\psi_m\}_{m=1}^M$ as above. Then for $1 \leq m, m' \leq N$ $\cov(u_m, u_{m'})= c_m \delta_{m,m'}$ for some constants $\{c_m\}_{m=1}^M$.
\end{prop}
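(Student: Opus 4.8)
The plan is to start from the exact expression for the feature covariance derived in \cref{eqn:covu}, and simply substitute the prescribed choice $g_m = \mathcal{F}^{-1}[\psi_m/\sqrt{s}]$. The key observation is that \cref{eqn:covu} already expresses $\cov(u_m, u_{m'})$ as a weighted inner product of the Fourier transforms $\mathcal{F}[g_m]$ and $\mathcal{F}[g_{m'}]$ in the space $L^2(\R^D; (2\pi)^{D/2}s)$. So the first step is to compute $\mathcal{F}[g_m]$ for our chosen $g_m$. Since $g_m$ is defined as an inverse Fourier transform, applying $\mathcal{F}$ should (under the regularity conditions deferred to \Cref{app:conditions}) simply recover $\psi_m/\sqrt{s}$, i.e. $\mathcal{F}[g_m](\omega) = \psi_m(\omega)/\sqrt{s(\omega)}$.

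Next I would substitute this back into \cref{eqn:covu}. The integrand becomes, up to the constant prefactor, $\frac{\psi_m(\omega)}{\sqrt{s(\omega)}}\,\overline{\frac{\psi_{m'}(\omega)}{\sqrt{s(\omega)}}}\,s(\omega)\,d\omega$. The crucial cancellation is that the two factors of $1/\sqrt{s}$ multiply against the $s$ coming from the spectral measure: $\frac{1}{\sqrt{s}}\cdot\frac{1}{\sqrt{s}}\cdot s = 1$ (pointwise, wherever $s>0$). This collapses the weighted inner product back to the ordinary $L^2(\R^D)$ inner product of $\psi_m$ and $\psi_{m'}$, namely $\cov(u_m,u_{m'}) \propto \int_{\R^D}\psi_m(\omega)\overline{\psi_{m'}(\omega)}\,d\omega = \langle \psi_m, \psi_{m'}\rangle_{L^2}$. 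By the assumed pairwise orthogonality of the $\{\psi_m\}$, this inner product vanishes for $m\neq m'$ and equals $\|\psi_m\|_{L^2}^2$ (times the prefactor) when $m=m'$, which we take to be $c_m$. This yields exactly $\cov(u_m,u_{m'}) = c_m\,\delta_{m,m'}$.

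The bulk of the argument is therefore a short, essentially algebraic manipulation; the conceptual content is entirely in recognizing that dividing $\psi_m$ by $\sqrt{s}$ in the spectral domain is precisely the reweighting needed to convert a $\sqrt{s}$-weighted $L^2$ inner product into a flat one. The main obstacle — and the reason the statement carries a reference to conditions in \Cref{app:conditions} — is analytic rather than algebraic: I must ensure that $g_m = \mathcal{F}^{-1}[\psi_m/\sqrt{s}]$ is well-defined and integrable so that $u_m$ makes sense as $\int g_m f\,d\bfx$, that Fubini/Tonelli is justified when interchanging the order of integration and expectation to reach \cref{eqn:covu}, and that the Fourier inversion theorem genuinely returns $\mathcal{F}[g_m] = \psi_m/\sqrt{s}$ (which requires $\psi_m/\sqrt{s}$ to be sufficiently regular and decaying, and $s$ to be bounded away from zero on the relevant support). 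Care is also needed where $s$ vanishes, since $1/\sqrt{s}$ can blow up; the decay conditions on $\psi_m/\sqrt{s}$ are exactly what control this.

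I would therefore structure the proof as: (i) invoke \cref{eqn:covu} to reduce $\cov(u_m,u_{m'})$ to a spectral inner product; (ii) use Fourier inversion, under the appendix conditions, to identify $\mathcal{F}[g_m] = \psi_m/\sqrt{s}$; (iii) perform the $s$-cancellation to obtain $\langle\psi_m,\psi_{m'}\rangle_{L^2}$ up to the constant $(2\pi)^{D/2}$; and (iv) apply orthogonality to conclude. The proof itself is short once the regularity hypotheses are granted, so I expect the author to foreground the cancellation and relegate the measure-theoretic justifications to the referenced appendix.
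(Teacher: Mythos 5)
Your proposal is correct and takes essentially the same route as the paper: the paper proves the converse direction in detail in \Cref{app:conditions} (Fubini plus Bochner to reach the spectral inner product of \cref{eqn:covu}) and obtains \cref{prop:ortho} by ``reversing the steps,'' which is exactly your substitution $\mcF[g_m]=\psi_m/\sqrt{s}$ followed by the $\frac{1}{\sqrt{s}}\cdot\frac{1}{\sqrt{s}}\cdot s = 1$ cancellation and orthogonality of the $\psi_m$. The analytic caveats you flag (justifying Fubini, the validity of Fourier inversion, and the decay/regularity of $\psi_m/\sqrt{s}$) are precisely the conditions the paper defers to that appendix, so nothing is missing.
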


\noindent We can also consider the converse problem. Namely, do there exist orthogonal features of the form $u_m = \int g_m(\bfx) f(\bfx)d\bfx$ that are not of the form above?

\begin{prop}\label{prop:converse}
Assume $u_m = \int g_m(\bfx) f(\bfx)d\bfx$, with $f$ a zero-mean Gaussian process indexed by $\R^D$, with stationary kernel with spectral density $s$. Then if $g_m$ decays is smooth and rapidly decaying, it can be written in the form $g_m(\bfx) =\mathcal{F}^{-1}[\psi_m/\sqrt{s}](\bfx)$ for some $\{\psi_m\}_{m=1}^M$, $\psi_m: \R^D \to \R$ pairwise orthogonal in $L^2(\R^D)$.
\end{prop}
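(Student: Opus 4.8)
The plan is to show that the construction in \cref{prop:ortho} is essentially invertible, so that the only freedom in producing orthogonal features lies in the choice of the orthogonal system $\{\psi_m\}$. First I would simply \emph{define} the candidate functions by $\psi_m \coloneqq \sqrt{s}\,\mathcal{F}[g_m]$. Applying the Fourier inversion theorem then gives $\mathcal{F}^{-1}[\psi_m/\sqrt{s}] = \mathcal{F}^{-1}[\mathcal{F}[g_m]] = g_m$, so the claimed functional form holds for \emph{any} sufficiently regular $g_m$. The substantive content of the proposition is therefore not the representation itself but the assertion that these $\psi_m$ are pairwise orthogonal, and for that I must use the standing hypothesis that the features $u_m$ are themselves orthogonal (pairwise uncorrelated).

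To establish orthogonality I would reuse the covariance identity \cref{eqn:covu}. Substituting $\mathcal{F}[g_m] = \psi_m/\sqrt{s}$ into that expression, the spectral density cancels the two factors of $1/\sqrt{s}$, leaving
\begin{equation*}
  \cov(u_m, u_{m'}) = (2\pi)^{D/2}\!\int_{\R^D} \psi_m(\omega)\,\overline{\psi_{m'}(\omega)}\,d\omega = (2\pi)^{D/2}\,\langle \psi_m, \psi_{m'}\rangle_{L^2(\R^D)}.
\end{equation*}
This is precisely the isometry from the kernel RKHS into $L^2$ noted after \cref{eqn:covu}. Since the features are assumed orthogonal, the left-hand side vanishes whenever $m \neq m'$, and hence $\langle \psi_m,\psi_{m'}\rangle_{L^2} = 0$ for $m \neq m'$; that is, the $\psi_m$ are pairwise orthogonal, as required. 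The requirement that $\psi_m$ be real-valued is a minor separate point: if $g_m$ is real and even then $\mathcal{F}[g_m]$, and thus $\psi_m$, is real, and otherwise one may either restrict to such $g_m$ or allow complex $\psi_m$ without altering the argument above.

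The remaining work, and the main obstacle, is making these two formal steps rigorous under the ``smooth and rapidly decaying'' hypothesis together with the conditions of \Cref{app:conditions}. In particular I would need: (i) $s>0$ almost everywhere, so that division by $\sqrt{s}$ is well defined; (ii) $\psi_m = \sqrt{s}\,\mathcal{F}[g_m] \in L^2(\R^D)$ and regular enough that the inversion $g_m = \mathcal{F}^{-1}[\psi_m/\sqrt{s}]$ holds, which is where the decay of $g_m$ (hence of $\mathcal{F}[g_m]$) must control the growth of $\sqrt{s}$; and (iii) the interchange of integration used to derive \cref{eqn:covu} from \cref{eqn:bochner}, justified by Fubini once joint integrability is verified. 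Pinning down the precise decay and regularity conditions that simultaneously guarantee (i)--(iii) is the delicate part, and I expect it to be exactly what is deferred to \Cref{app:conditions}.
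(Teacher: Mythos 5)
Your proposal is correct and follows essentially the same route as the paper's proof in \Cref{app:conditions}: there one defines $\psi_m \coloneqq \mathcal{F}[g_m]\sqrt{s}$, rewrites the assumed relation $\cov(u_m,u_n)=c_m\delta_{m,n}$ via Bochner's theorem and two Fubini interchanges (justified by $g_m\in L^1(\R^D)$ together with a uniform second-moment bound on the stationary process) as the $L^2$ inner product $\int \mathcal{F}[g_m]\sqrt{s}\,\overline{\mathcal{F}[g_n]\sqrt{s}}\,d\omega$, and then recovers $g_m$ by dividing by $\sqrt{s}$ and applying Fourier inversion. The integrability points you defer at the end are exactly what the appendix supplies --- in particular $\psi_m\in L^2(\R^D)$ follows there simply from boundedness of $\mathcal{F}[g_m]$ (since $g_m\in L^1$) and integrability of $s$ --- while your observations about $s>0$ almost everywhere and the real-valuedness of $\psi_m$ flag genuine loose ends that the paper itself glosses over.
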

The precise conditions for \Cref{prop:ortho,prop:converse} are in \Cref{app:conditions}. For a given stationary kernel, \emph{Variational orthogonal features} (VOF) are any set of inducing features following the construction in \cref{prop:ortho}. 

To perform inference, we also need the entries of $\Kuf$:
 \begin{align}
     [\Kuf]_{m,n} &= \cov(u_m, f(\bfx')) = \Exp{}{\int_{\R^D}g_m(\bfx)f(\bfx) d\bfx f(\bfx')}= \int_{\R^D} g_m(\bfx)k(\bfx,\bfx') d \bfx \nonumber \\
          &= \int_{\R^D}\left(\int_{\R^D}g_m(\bfx)e^{-i\omega \cdot \bfx}\frac{d\bfx}{(2\pi)^{D/2}}\right) e^{i\omega \bfx'} s(\omega)d\omega =(2\pi)^{D/2}\mcF^{-1}[\mcF[g_m]s](\bfx'). \label{eqn:covuf}
 \end{align}

\noindent Substituting the definition of VOF into \cref{eqn:covuf}, 
\begin{equation}\label{eqn:covufvof}
\cov(u_m, f(\bfx')) = (2\pi)^{D/2} \mcF^{-1}[\psi_m \sqrt{s}](\bfx').
\end{equation}
In cases when \cref{eqn:covufvof} is not analytic, it can be evaluated via Monte Carlo (MC) integration which, as discussed in \cref{sec:MCELBO}, is sufficent to obtain unbiased estimators of the ELBO.
\subsection{Examples}
We now consider several realizations of the features described in \cref{prop:ortho}. The construction given in \cref{sec:hermite} is the only case in which we compute $\Kuf$ in closed form for the SE-kernel. This calculation leads to features that are a special case of the ``eigenfunction features'' described in \citet{burt2019rates}. While we believe this connection is interesting, the calculation in \cref{sec:hermite} is somewhat tedious and the details can be safely skipped. The other examples discussed in this section can be applied to general stationary kernels, but require MC estimation of the ELBO.

\subsubsection{Analytic Example: Hermite Functions Features}\label{sec:hermite}

We consider an example of variational orthogonal features in which $\mcF^{-1}[\psi_m \sqrt{s}](\bfx')$ can be computed in closed form. Suppose $D=1$ and choose $\psi_{m}(x)$ to be the normalized Hermite function defined by
\[
\psi_m^{(r)}(x)=(-i)^{-m}H_m(rx) e^{\frac{-r^2x^2}{2}} \frac{2^{-m/2}\sqrt{r}}{\pi^{1/4} \sqrt{m!}},
\]
where $H_m(x)$ is the Hermite polynomial of degree $m$ defined by $H_0(x)=1$, $H_1(x)=2x, H_{m+1}(x) = 2xH_m(x)-2mH_{m-1}(x)$ and $r$ is a variational parameter, and we have used $x$ instead of $\bfx$ to emphasize the assumption $d=1$. It follows from \citet[7.374]{gradshteyn2014table} that these functions are orthonormal in $L^2(\R).$

Suppose inference is being performed with a squared exponential (SE) kernel with lengthscale $\ell^2$ and variance $v.$ This kernel has spectral measure
\[
s(\omega) = v \ell \exp\left(\frac{ -\ell^2\omega^2}{2}\right).
\]

\noindent In \Cref{app:hermite} we show the corresponding VOF are:
\begin{align*}
u_m &=c_m\left(r^2-\frac{1}{2}\ell^2\right)^{-1/2} \int_{-\infty}^\infty f(x)\exp\left(-\frac{x^2}{2\left(r^2-\frac{1}{2}\ell^2\right)}\right) G_m(x;r,\ell) dx \nonumber
\end{align*}
with 
\begin{align*}
&c_m =  \left(\frac{2^\frac{-m-1/2}{2}\sqrt{r}}{\sqrt{m!\pi v \ell}}  \right)\left(\frac{r^2+\frac{1}{2}\ell^2}{r^2-\frac{1}{2}\ell^2}\right)^\frac{m}{2} \text{\, and \, \,} G_m(x;r, \ell)=H_m\left(\frac{rx}{\sqrt{\left(r^2-\frac{1}{2}\ell^2\right)\left(r^2+\frac{1}{2}\ell^2\right)}}\right).
\end{align*}

\noindent In order for $g_m(\omega)=\mcF[\psi_m^{(r)}/\sqrt{s}](\omega)$ to be well-defined we enforce $2r^2>\ell^2.$ The entries of $\Kuf$ are
\begin{align*}
 \cov&(u_{m},f(x_n)) \!= \!\sqrt{2\pi}v \ell c_m\left(r^2+\frac{1}{2}\ell^2\right)^{-1/2}  \exp\left(-\frac{x_n^2}{2\left(r^2+\frac{1}{2}\ell^2\right)}\right)\!\left(\frac{r^2-\frac{1}{2}\ell^2}{r^2+\frac{1}{2}\ell^2}\right)^{m}\!G_m(x_n;r, \ell).
\end{align*}

\noindent We show in \Cref{app:hermite} that this construction corresponds to \emph{eigenfunction inducing features} \citep{burt2019rates}, for the SE-kernel defined with respect to an input distribution $ \mcN\left(0, (4r^4-\ell^4)/(4\ell^2)\right)$. Eigenfunction features are orthogonal features defined by $u_m =\lambda_m^{-1/2}\int \phi_m(x) f(\bfx)p(\bfx)d\bfx,$ where $p(\bfx)$ is the density of a distribution posited on the inputs and $\phi_m(x)$ are the eigenfunctions of a \emph{kernel operator,} $\mcK: \mcK h(\bfx') = \int h(\bfx)k(\bfx,\bfx')p(\bfx)dx.$ For eigenfunction features, $\cov(u_m, u_n) =  \delta_{m,n}$ and $\cov(u_m, f(\bfx))= \lambda_m^{-1/2}\phi_m(\bfx)$,
where $\lambda_m$ is the eigenvalue of $\mcK$ corresponding to $\phi_m$. 

In most cases neither the eigenfunctions nor the eigenvalues can be computed in closed form. However, \cref{prop:converse} implies that all eigenfunction features for continuous, stationary kernels associated to distributions with sufficiently smooth and decaying densities are a special case of VOF. The choice of input measure implicitly defines an orthogonal set of function in $L^2(\R^D)$.


\subsubsection{Trigonometric Variational Orthogonal Features}\label{sec:trigvof}

An alternative to the Hermite functions construction given in \Cref{sec:hermite}, is to choose 
\[
\psi_{2m}(\omega) = \cos\left(\pi m\omega/(2a)\right)\mathbf{1}_{[-a,a]} \quad \text{and} \quad \psi_{2m+1}(\omega) =\sin\left((m-1)\pi\omega/(2a)\right)\mathbf{1}_{[-a,a]},\]
where $a \in (0,\infty)$ is a variational parameter. We refer to these features as TrigVOF. The matrix $\Kuf$ cannot be calculated in closed form for TrigVOF and we estimate the marginal likelihood via Monte Carlo methods. While the $\psi_m$ are not smooth, we give a heuristic justification that the resulting features should be well-defined and orthogonal in \cref{app:conditions}.
\subsubsection{Orthogonal Polynomials}
Many other collections of VOF can be defined through different choices of $\{\psi_m\}_{m=1}^M$. For example, VOF can be formed by modifying orthogonal polynomials by multiplying through by the square root of the weight function with respect to which they are orthogonal (the Hermite polynomials have a Gaussian weight function leading to the earlier construction). MC estimation will generally be necessary to compute estimates of the ELBO, as with TrigVOF as $\Kuf$ does not typically have a closed-form. 

\subsection{When a Factorized $q(\bfu)$ is (almost) Exact}\label{sec:meanfield}

To compute \cref{eqn:ELBO}, we need $\mu_n=\bfK_{\bfu, \mathrm{f_n}}\transpose\bfm$ and $\sigma_n^2 = k(\bfx_n,\bfx_n)+ \bfK_{\bfu, \mathrm{f_n}}(\bfS-\bfI)\bfK_{\bfu, \mathrm{f_n}}\transpose.$ In the case of VOF and assuming, without loss of generality, that $\|\psi_m(\omega)\|^2_2=1$, we have $\Kuu = \bfI.$ If $\bfS$ is diagonal, $\sigma_n^2$  can be computed in $\mcO(\tilde{N}M)$ instead of $\mcO(\tilde{N}M^2).$ The optimal $\bfS$ \citep{titsias2009variational} for conjugate regression with likelihood variance $\sigma^2$ in the case $\Kuu=\bfI$ is
\begin{align}
 \bfS^{*}&=\Kuu(\Kuu+ \sigma^{-2}\Kuf\Kuf\transpose)^{-1}\Kuu \nonumber =(\bfI+ \sigma^{-2} \Kuf\Kuf\transpose)^{-1},\label{eqn:optcov}
\end{align}
which is diagonal if $\Kuf\Kuf\transpose$ is diagonal. The matrix $\Kuf$ depends on the distribution of the $\bfx_n$.  
\begin{prop}\label{prop:meanfield}
Suppose that we are performing inference in a GP regression model with eigenfunction features defined with respect to input density $p(\bfx)$. Suppose that the training data is independently and identically distributed according to a distribution with density $p(\bfx)$. Then, for fixed $M$, as $N \to \infty,$ $N\bfS^*$ tends to a diagonal matrix with probability $1$.  
\end{prop}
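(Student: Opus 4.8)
The plan is to reduce the statement to a strong law of large numbers applied entrywise to $\Kuf\Kuf\transpose$, together with continuity of matrix inversion. First I would record the features of eigenfunction features that make this tractable. Since $\cov(u_m,u_{m'})=\delta_{m,m'}$ we have $\Kuu=\bfI$, so the optimal covariance from \cref{eqn:optcov} is $\bfS^*=(\bfI+\sigma^{-2}\Kuf\Kuf\transpose)^{-1}$, and the cross-covariances are $[\Kuf]_{m,n}=\lambda_m^{-1/2}\phi_m(\bfx_n)$. Hence the $(m,m')$ entry of $\Kuf\Kuf\transpose$ is $\lambda_m^{-1/2}\lambda_{m'}^{-1/2}\sum_{n=1}^N \phi_m(\bfx_n)\phi_{m'}(\bfx_n)$, and the entire question concerns the large-$N$ behaviour of these empirical sums.

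Next I would apply the strong law of large numbers to each of the finitely many (for fixed $M$) entries. Because the $\bfx_n$ are i.i.d.\ with density $p$ and the eigenfunctions lie in $L^2(\R^D;p)$, Cauchy--Schwarz gives $\Exp{p}{|\phi_m(\bfx)\phi_{m'}(\bfx)|}<\infty$, so the SLLN yields $\tfrac1N\sum_{n=1}^N \phi_m(\bfx_n)\phi_{m'}(\bfx_n)\to \Exp{p}{\phi_m(\bfx)\phi_{m'}(\bfx)}=\delta_{m,m'}$ almost surely. The last equality is precisely the $L^2(p)$-orthonormality of the eigenfunctions of $\mcK$, which is the same relation that underlies $\Kuu=\bfI$; one can read it off by combining $\cov(u_m,u_{m'})=\delta_{m,m'}$ with the eigenrelation $\int \phi_{m'}(\bfx')k(\bfx,\bfx')p(\bfx')d\bfx' = \lambda_{m'}\phi_{m'}(\bfx)$. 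A finite intersection of probability-one events is again probability-one, so simultaneously over all $m,m'\le M$ we obtain $\tfrac1N\Kuf\Kuf\transpose \to \mathrm{diag}(\lambda_1^{-1},\dots,\lambda_M^{-1})$ almost surely.

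Finally I would convert this into a statement about $N\bfS^*$ by factoring out $N$: writing $N\bfS^* = (\tfrac1N\bfI + \sigma^{-2}\tfrac1N\Kuf\Kuf\transpose)^{-1}$, the matrix inside the inverse converges almost surely to $\sigma^{-2}\mathrm{diag}(\lambda_1^{-1},\dots,\lambda_M^{-1})$, since the $\tfrac1N\bfI$ term vanishes. This limiting matrix is diagonal and, provided $\lambda_m>0$ for $m\le M$ (true for a strictly positive-definite kernel such as the SE kernel), invertible; as inversion is continuous on a neighbourhood of any invertible matrix, the continuous mapping theorem gives $N\bfS^*\to \sigma^2\,\mathrm{diag}(\lambda_1,\dots,\lambda_M)$ almost surely, which is diagonal.

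The only real subtlety is this last step: entrywise convergence does not by itself control the inverse, so I must establish that the limit matrix is invertible \emph{before} inverting, and it is exactly here that positivity of the leading $M$ eigenvalues enters. Everything else is routine bookkeeping --- checking integrability so the SLLN applies, and identifying the limit of the diagonal-generating sums with the orthonormality relation already implicit in $\Kuu=\bfI$.
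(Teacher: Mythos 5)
Your proposal is correct and follows essentially the same route as the paper: apply the strong law of large numbers entrywise to $\frac{1}{N}\Kuf\Kuf\transpose$ using the $L^2(p)$-orthonormality of the eigenfunctions, then factor out $N$ and pass the almost-sure limit through the inverse in $N\bfS^* = \left(\frac{1}{N}\bfI + \sigma^{-2}\frac{1}{N}\Kuf\Kuf\transpose\right)^{-1}$. The only difference is in the last step, where you invoke continuity of matrix inversion at the invertible limit $\sigma^{-2}\bfLam^{-1}$ (correctly noting that $\lambda_m>0$ for $m\leq M$ is what licenses this), whereas the paper proves that same continuity by hand via the identity $(\bfLam^{-1}+\mcE_{N})^{-1} = \bfLam-\bfLam\mcE_{N}(\bfLam^{-1}+\mcE_{N})^{-1}$ together with operator-norm bounds.
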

\noindent As noted in \cref{sec:hermite}, for `nice' kernels, eigenfunction features are a special case of VOF, such at least for certain VOF such as the Hermite features discussed in \cref{sec:hermite}, \cref{prop:meanfield} is applicable.

\begin{proof}[Sketch of Proof]
For eigenfunction features, using $\cov(u_m, u_n) =  \delta_{m,n}$ and $\cov(u_m, f(x))= \lambda_m^{-1/2}\phi_m(x)$,
\begin{align*}
    \!\frac{1}{N\sigma^2}[\Kuf\Kuf\transpose]_{m,m'} \!=\!\frac{1}{N\sigma^2\sqrt{\lambda_m\lambda_{m'}}}\sum_{n=1}^N\!\phi_{m}(\bfx_n)\phi_{m'}(\bfx_n).
\end{align*}
For large $N,$ applying the strong law of large numbers on the right hand side,
\begin{align*}
     \lim_{N \to \infty}\frac{1}{N\sigma^2}[(\Kuf\Kuf\transpose)_N]_{m,m'} \asto \frac{1}{\sigma^2\sqrt{\lambda_m\lambda_{m'}}}\int \phi_{m}(\bfx)\phi_{m'}(\bfx)p(\bfx)d\bfx 
    = \delta_{m,m'}\lambda_m^{-1}\sigma^{-2}.
\end{align*}
 Using \cref{eqn:optcov}, and defining $[\bfLam]_{m,m'}=\delta_{m,m'}\lambda_m,$ 
\begin{align}
    N\bfS^{*}_N = \left(\frac{1}{N}\bfI+\frac{1}{N\sigma^2}(\Kuf\Kuf\transpose)_N\right)^{-1} = (\sigma^{-2}\bfLam^{-1}+\mcE_{N})^{-1},
\end{align}
where $[\mcE_N]_{j,k}$ tends to zero as $N \to \infty$ In \Cref{app:meanfield} we show $(\sigma^{-2}\bfLam^{-1}+\mcE_{N})^{-1} \to \sigma^{2}\bfLam.$
\end{proof}


\subsection{Estimating the ELBO with Samples}\label{sec:elbosample}
At first, it appears that the computational cost of $\mcO(\tilde{N}MT)$ for computing an unbiased estimate of the ELBO using $T$ samples is the best we can hope for, as we need to compute $\tilde{N}M$ cross covariances between features and inducing points. Consider the mean and variance of the variational approximation $q(f(\bfx_n)):$  
\begin{align*}
\mu_n &= \sum_{m=1}^M \bfm_m\int \frac{\sqrt{s(\omega)}}{(2\pi)^\frac{D}{4}} e^{-i\omega \bfx_n} \psi_m(\omega) d\omega = \int \frac{\sqrt{s(\omega)} }{(2\pi)^\frac{D}{4}}e^{-i\omega \bfx_n}\sum_{m=1}^M \bfm_m \psi_m(\omega) d\omega,\\
\sigma_n^2 &= k(\bfx_n, \bfx_n) - \bfK_{\bfu, \mathrm{f_n}}\transpose \left(\bfS-\bfI\right) \bfK_{\bfu, \mathrm{f_n}} \\
&=(2\pi)^{-D/2}\int \sqrt{s(\omega)} e^{-i\omega \bfx_n} \int \sqrt{s(\omega')} e^{i\omega' \bfx_n} \psi(\omega) \left(\bfS-\bfI\right) \psi(\omega')\transpose d\omega d\omega'. 
\end{align*}
where $\psi(\omega)$ is a vector of the $M$ functions $\{\psi_m\}_{m=1}^M.$ After interchanging the order of integration and summation, we see that the part of the calculation dependent on the features needs to only be computed once per batch. The computation of a batch of $\widehat{\mu}$ can be performed in $\mcO(\tilde{N}T+MT)$ where $T$ is the number of samples used in order to estimate the mean. We use independent sets of samples to compute two estimates of the mean in order to compute $\widehat{\mu_n^2}.$ 

If $\bfS$ is diagonal, computing a batch estimate of $\widehat{\sigma}_n^2$ requires $\mcO(\tilde{N}T+MT)$ operations; if $\bfS$ is a dense matrix, the time complexity is $\mcO(\tilde{N}T+M^2T)$. At prediction time we can directly estimate \cref{eqn:covuf} via Gauss-Hermite quadrature. This provides biased, but deterministic estimators of the mean and variance, and ensures that the variance estimator is non-negative.

For the TrigVOF, we can sample $\omega$ uniformly on $[-a,a]$ in order to obtain an unbiased estimate of all terms. For a fixed sample $\omega$, the estimator only depends on the spectral density of the kernel at $s(\omega)$, and therefore can be seen as performing variational inference in a parametric featurized linear regression model. We are able to obtain unbiased estimators of the marginal likelihood by combining the predictions of many such models. This estimator is similar to the estimator developed concurrently in \citet{evans2020quadruply}, in that both methods rely on obtaining unbiased estimators of the mean and variance of the variational posterior. However, they begin with a high dimensional parametric model and calculations are performed largely in feature space, whereas our estimator is fully non-parametric.

\subsection{Convergence of VOF}\label{sec:convergenceVOF}
In the case of a Gaussian likelihood, and assuming we can solve the convex optimization problem of finding the optimal $q(\bfu)$, in order to show that the variational posterior converges to the true posterior as $M \to \infty,$ it suffices to show $\text{tr}(\Kff-\Kuf\transpose\Kuu^{-1}\Kuf) \to 0$ \citep{burt2019rates}\footnote{Minimizing $\text{tr}(\Kff-\Kuf\transpose\Kuu^{-1}\Kuf)$ has long been a focus of Gaussian process involving Nystr\"om approximations, e.g.~\cite{herbrich2003fast}, and the implication of convergence of the resulting variational approximation is implicit in \citet{titsias2009variational}, \citet{titsias_variational_2014}.}, that is it suffices to show that each diagonal element of $\Kuf\Kuu^{-1}\Kuf$ tends to the corresponding diagonal element in $\Kff$ . In \Cref{app:convergence} we show that if $\{\phi_m\}_{m=1}^\infty$ spans $L^2(\R^D)$ then as $M \to \infty$, the approximate posterior formed by VOF becomes exact.

When $\{\phi_m\}_{m=1}^\infty$ span $L^2([-a,a])$, such as with TrigVOF, the features can only represent low-frequency information, as they do not depend on the spectral density of the kernel outside of $[-a,a]$ (note the ELBO and the predictive mean are independent of the spectral density of the kernel outside $[-a,a]$ in this case. For $M$ sufficiently large, the variational lower bound will favor large values of the variational parameter $a$ and exact inference will again be recovered as $M \to \infty$ if $a$ is globally optimized. 
\section{Experiments}
In this section, we present some simple experiments showing the feasability of variational features, as well as the effect of the restriction of the variational parameter $\bfS$ to be diagonal. All experiments are implemented using the `inducing variable' framework \citep{van2020framework} within GPflow. \citep{GPflow2017}. We investigate the performance of both the Hermite inducing variables, implemented in closed form as discussed in \cref{sec:hermite}, and the TrigVOF discussed in \cref{sec:trigvof} which are implemented using Monte Carlo approximation as discussed in \cref{sec:elbosample}.
\subsection{Choice of Variational Parameters}
As discussed in \cref{sec:convergenceVOF}, even as $M$ tends to infinity, if the parameter $a$, is fixed, the TrigVOF will not recover exact inference. In \cref{fig:matrix_approx}, we show the quality of approximation of the matrix $\Qff$ to $\Kff$ for different choices of $a$ and $M$, If $a$ is small (bottom left), we only recover a band-limited version of the kernel. If $a$ is large and $M$ is not sufficiently large, the approximation is only accurate over a narrow part of input space (top left). 
\begin{figure}
    \centering
    \includegraphics[width=\textwidth]{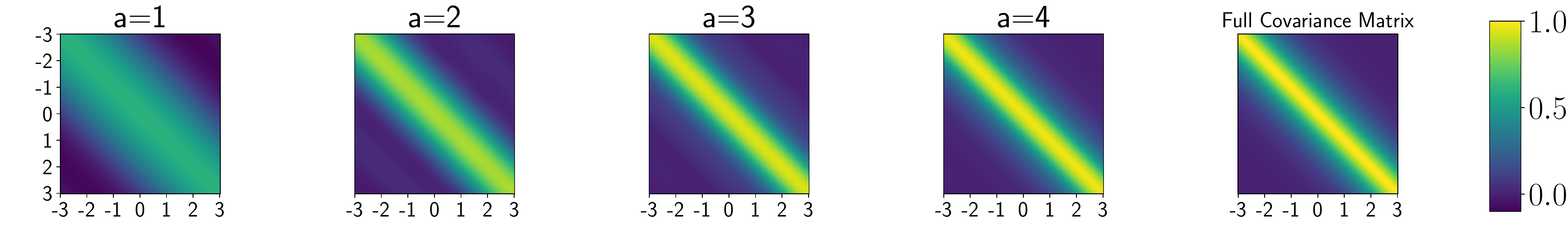}
    \includegraphics[width=\textwidth]{Increasing_a_m_31.pdf}
    \caption{Approximation of Mat\'{e}rn 3/2 kernel matrix with lengthscale $0.2$ by Trig VOF over $[-3,3].$ The top row uses $a=10,$ the bottom row uses $M=31.$}
    \label{fig:matrix_approx}
\end{figure}
The impact of mis-specifycing $a$ on the quality of inference is shown in \cref{fig:1d_regression}, for a synthetic, one-dimensional dataset. The top left plot is the result of choosing $a$ to be two small, so that only low-frequency featurs are modelled, while the bottom left is the result of choosing $a$ to small for the given $M$, so that the features cannot represent the data well over the entire domain. The top right hand figure is the result of optimizing $a$ using the variational lower bound, and retains most of the features of the exact posterior shown in the bottom right. Similar considerations arise when features defined with respect to a collection of orthogonal polynomials on a fixed interval are used.
\begin{figure*}
    \centering
    \includegraphics[width=\textwidth]{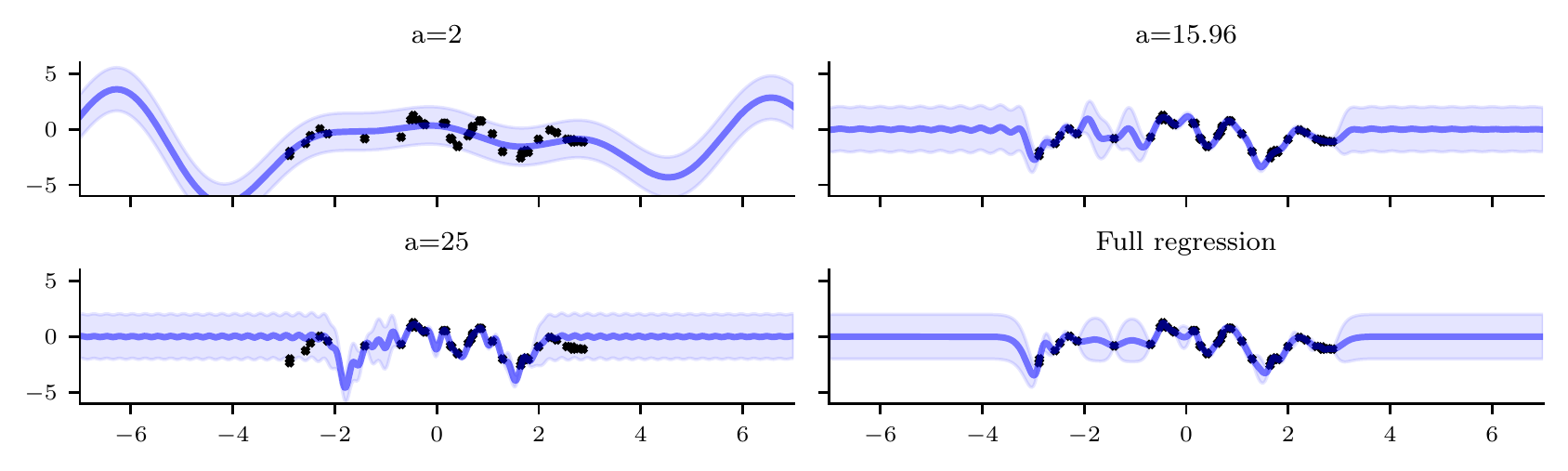}
    \caption{One dimensional regression with (fixed) Mat\'{e}rn 5/2 kernel, $N=80$ and $M=31.$ The first 3 models are trained with VOF with a trigonometric basis on $[-a,a]$. A dense covariance matrix $\bfS$ was used.}
    \label{fig:1d_regression}
\end{figure*}
\subsection{Effect of the Diagonal Approximation}
We consider the impact of the diagonal approximation to $\bfS$ on several synthetic datasets for both Hermite and Trig Features with a squared exponential kernel. From \cref{prop:meanfield}, if the inputs are Gaussian distributed, for $N$ sufficiently large we expect the diagonal approximation to be close to exact inference for the Hermite features.  \Cref{fig:Diagonal_Approx} shows ELBO plotted as a function of the number of features for both diagonal $\bfS$ and general $\bfS$. 
\begin{figure}[t]
    \centering
    \includegraphics[width=\textwidth]{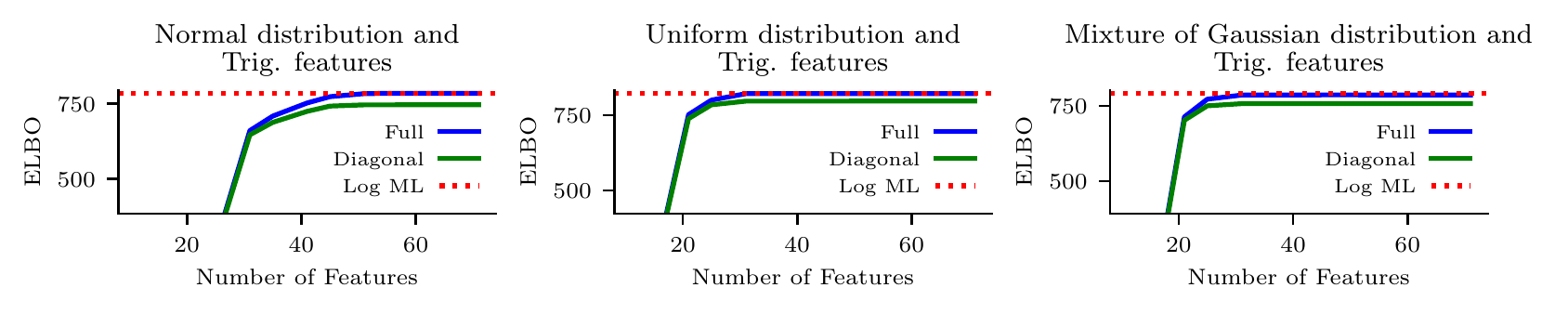}
    \includegraphics[width=\textwidth]{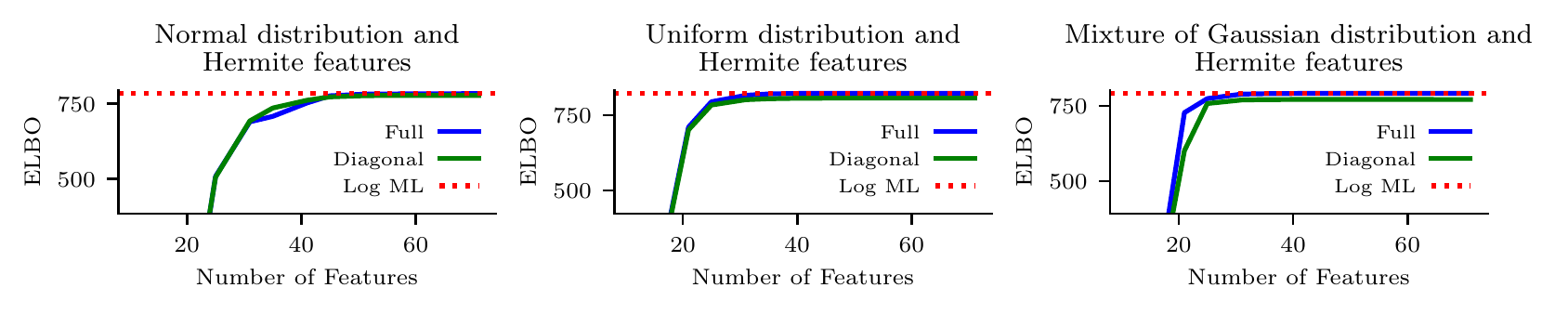}
    \caption{A comparison of dense covariance matrices (blue) and diagonal covariance matrices (green) for both the Trigonometric (top) and Hermite (bottom) VOF. The quality of the diagonal approximation is dependent on the covariate distribution. From left to right a Gaussian distribution, a uniform distribution and a mixture of Gaussians. In \cref{sec:meanfield} we showed that the optimal covariance matrix for the Hermite features and Gaussian distribution tends to a diagonal matrix as the data set becomes large. This explains the negligible difference in the cost of inference between the full rank and diagonal methods. (Note that the full matrix may occasionally perform worse than the diagonal matrix due to challenges with optimization). }
    \label{fig:Diagonal_Approx}
\end{figure}
The diagonal approximation has almost no effect on the quality of inference (as measured by the ELBO) for the Hermite features when the inputs are Gaussian or uniformly distributed. At times, a better lower bound is obtained using the diagonal approximation, which must be the result of difficulties with optimization when jointly optimizing variational parameters and model hyperparameters. When the input variance is multimodal, there is a more noticeable gap between the quality of the approximation obtained with a diagonal $\bfS$ using the Hermite features as opposed to with the full $\bfS$. There is generally a somewhat noticeable gap between the TrigVOF with the diagonal $\bfS$ and TrigVOF with the full $\bfS$.
\subsection{Limitations}
\paragraph{Scaling in dimension and additive models}
Like many other structured forms of GP approximations, VOF struggle with inputs that are in a high-dimensional space. Even if the training inputs lie on a smooth low-dimensional space, as VOF are defined without reference to the input distribution (unlike most implementations of inducing points) many more features will be needed for high-dimensional data. This can be avoided by placing additional structure on the prior, for example the additive structure considered in \citet{hensman2016variational} when using VFF.

\paragraph{Stochasticity in objective function}
Stochasticity in the evaluation of the evidence lower bound represents another significant obstacle to the practical application of this method. In particular, it is not clear how $T$ may need to scale with $M$ in order to achieve low-variance estimates of the ELBO. We found that variance in estimates of the ELBO represents a signficant obstacle.

\paragraph{Practical Implementation Obstacles}
While the per iteration computational cost of the Hermite features of $\mcO(\tilde{N}M^2)$, which is significantly smaller for $M \gg \tilde{N}$ than using the same number of inducing inputs, in practice the difference in computational time can be quite small. First, as the features are essentially limited to low-dimensional or additive models, the number of features needed for inference will often be quite small, so that $\mcO(M^3)$ may often be comparable to $\mcO(\tilde{N}M^2)$ for reasonable choices of minibatch size. Secondly, in the case of Hermite features evaluating $\Kuf$ requires evaluating a scaled version of the Hermite polynomials at the datapoints in each iteration. Both the evaluation of this quantity, as well as its derivative can be evaluated using standard recursion properties of the Hermite polynomials. However, due to the recursive nature of these formulas, in our implementation we find that the computational savings over inducing points is small for reasonable choices of $M$. Similar considerations arise when using Monte Carlo estimation for features defined with respect to other families of orthogonal polynomials.

\section{Related Work}

Several methods for Gaussian process models utilize similar approaches to the one discussed in this work. \citet{solin2014hilbert} use solutions to differential equations to construct an approximate series expansion to the kernel, leading to a parametric prior that resembles the Gaussian process prior, with each feature independent under the parameteric prior. Recently, \citet{evans2020quadruply} rely on beginning with a parametric model with features that are uncorrelated under the prior (e.g.~Random Fourier features) and perform variational inference in this model to improve scalability. Many of the MC estimates in their work closely resemble those employed here. 

Within the non-parametric variational framework, \citet{hensman2016variational} constructed VFF, which have a low-rank plus diagonal $\Kuu$, but are only applicable to Mat\'{e}rn kernels. \Citet{solin2019know} constructed \emph{variational harmonic features} based on approximately solving for harmonics of the Laplace operator. Variational harmonic features lead to a diagonal $\Kuu$ and can be applied to any stationary kernel, but the GP must be defined on a bounded subspace of $\R^D$, subject to boundary conditions. \citet{shi2019sparse} compute a matrix inverse to construct two sets inducing points that are independent from each other under the prior in order to improve scalability. \citet{burt2019rates} introduced \emph{eigenfunction features} which have a diagonal $\Kuu$ as a means of analyzing possible convergence rates for inducing point methods. They require analytic solutions to the integral equation for $\Kuf$. VOF generalize eigenfunction features for stationary kernels on $\R^D$, and are not limited by the need for closed-form solutions.

\section{Conclusions}
Varitional orthogonal features can be applied to Gaussian process conjugate regression tasks with stationary prior kernels and achieve better computational scaling in the number of features than existing methods. Constructing new VOF straightforward given any family of orthogonal functions. Methods for improving the scaling in data dimension, when using non-additive kernels are a promising direction for future work. This likely involves introducing some form of adaptivity, that allows the chosen basis to depend more strongly on the observed data. 

\section*{Acknowledgements}
Thanks to Nicolas Durrande for pointing to references regarding connections to the RKHS literature.
\bibliographystyle{abbrvnat}
\bibliography{preprint}

\onecolumn
\appendix

\section{Conditions for Proposition 1}\label{app:conditions}
We first prove that any collection of inducing features of the form $u_m(\bfx) = \int g_m(\bfx)f(\bfx)d\bfx$ with $\cov(u_m,u_n)=c_m\delta_{m,n}$ can be written in the form stated in \cref{prop:ortho}. We assume the mean function is $0$, \footnote{Bounded mean would suffice with minor modifications to the proof.} the kernel has a spectral measure with density with respect to Lebesgue measure and $g_m(\bfx) \in L^1(\R^D)$, is measurable and real valued for all $m$. As the mean function is $0$, 
\begin{align}\label{eqn:assumption}
c_m \delta_{m,n} &= \Exp{\!}{\int g_m(\bfx)f(\bfx)d\bfx\int g_n(\bfx')f(\bfx')d\bfx'}  =\Exp{\!}{\int_{\R^D \times \R^D} g_m(\bfx)g_n(\bfx')f(\bfx)f(\bfx')d(\bfx,\bfx')}
\end{align}
for all $m,n$. In order to justify the second equality, it suffices to show that the integral over the product measure converges absolutely (almost surely with respect to the Gaussian process).  By Markov's inequality, it suffices to show it converges absolutely in expectation, i.e.

\[\Exp{}{\int_{\R^D \times \R^D} |g_m(\bfx)g_n(\bfx')f(\bfx)f(\bfx')|d(\bfx,\bfx')}< \infty.\]

Again applying Fubini's theorem this is the case if
\[
\int_{\R^D\times\R^D} |g_m(\bfx)||g_n(\bfx')| \Exp{}{|f(\bfx)f(\bfx')|} d(\bfx,\bfx')< \infty.
\]
Using Cauchy-Schwarz and stationarity of the process, 
\[
\Exp{}{|f(\bfx)f(\bfx')|} \leq \sqrt{\Exp{}{|f(\bfx)|^2}\Exp{}{|f(\bfx')|^2}} = \Exp{}{|f(0)|^2} = C,
\]
where $C<\infty$ is the (uncentered) second moment of a half-normal distribution with variance $k(0,0)$. Then,
\[
\int_{\R^D\times\R^D} |g_m(\bfx)||g_n(\bfx')| \Exp{}{|f(x)f(x')|} d(\bfx,\bfx')\leq C\int_{\R^D\times\R^D} |g_m(\bfx)||g_n(\bfx')|d(\bfx,\bfx') < \infty.
\]
where the final inequality uses $g_m ,g_n \in L^1(\R^D)$. Hence \cref{eqn:assumption} holds. 

As the expectation of the absolute value of the integral converges, we may also interchange the expectation and integrals in \cref{eqn:assumption}, giving,
\[
c_m \delta_{m,n}=\int \int  g_m(\bfx)g_n(\bfx')\Exp{}{f(\bfx)f(\bfx')}d\bfx d\bfx'=\int \int  g_m(\bfx)g_n(\bfx')k(\bfx,\bfx')d\bfx d\bfx'.
\]
As our kernel is assumed stationary, we can apply Bochner's Theorem, using the assumption that the spectral measure has density, which we denote by $s$, to rewrite the RHS,
\[
c_m \delta_{m,n}=\int \int  g_m(\bfx)g_n(\bfx')\int e^{-i\omega\cdot \bfx}\overline{e^{-i\omega\cdot \bfx'}}s(\omega)d\omega d\bfx d\bfx'
\]
As each of the iterated integrals converges absolutely, we may again use Fubini's theorem,
\[
c_m \delta_{m,n}=\int \left(\int g_m(\bfx) e^{-i\omega\cdot \bfx}d\bfx\sqrt{s(\omega)}\right)\left(\overline{\int  g_n(\bfx')e^{-i\omega\cdot \bfx'}   d\bfx'\sqrt{s(\omega)}}\right)d\omega
\]
As $g_m \in L^1(\R^D)$ its Fourier transform is bounded, so $\mcF [g_m]\sqrt{s} \in L^2(\R^D)$.
We conclude, $\psi_m\coloneqq \mcF [g_m]\sqrt{s} \in L^2(\R^D)$ are pairwise orthogonal in $L^2(\R^D)$. Dividing both sides by $\sqrt{s}$ and applying Fourier inversion completes the proof of this direction.  

The proof of sufficiency follows by reversing the steps of the above argument, making the necessary assumptions on $\psi_m(\omega)/s(\omega)$  (integrability and integrability of Fourier transform) so that the necessary Fourier transforms exist and Fubini's theorem can be applied.

Hermite VOF with a squared exponential kernel lead to a $g_m$ that is both infinitely differentiable and integrable if $2r^2>\ell^2$. In particular, in this case $\psi_m/\sqrt{s}$ is a member of the \emph{Schwartz space} a class of functions that are rapidly decaying with rapidly decaying derivatives. As the Fourier transform maps Schwartz functions to Schwartz function, absolute integrability of the Fourier transform follows.

While we empirically find that using basis functions that are piece-wise continuous (e.g. TrigVOF) do not perform pathologically, there is more difficulty in rigorously verifying that they are well-defined. The inverse Fourier transform of a function that is piece-wise continuous but not continuous is not absolutely integrable (e.g. the Fourier transform of $\mathbf{1}_{[-a,a]}$ is a sinc function). We note that the covariance matrices only depend on $L^2$ properties of Fourier transform of $\psi_m/\sqrt{s}$. As these are persevered under the Fourier transform, approximating the piece-wise continuous functions by smooth functions in $L^2$ and taking Fourier transforms of these functions would lead to an inference scheme with well-defined features that is arbitrarily close to the inference scheme developed with the TrigVOF.
\section{Hermite Feature Derivation}\label{app:hermite}
The Hermite polynomials in one dimension are defined by,
\begin{equation}\label{eqn:recurrence}
H_0(x)=1, H_1(x)=x \text{ and } H_{n+1}(x)= 2x H_n(x)-2nH_{n-1}(x). 
\end{equation}
They satisfy the orthogonality relation:
\begin{equation}\label{eqn:hermiteortho}
 \int H_m(rx)H_n(rx) e^{-r^2x^2} dx = \frac{1}{r}\sqrt{\pi} 2^n n!\delta_{m,n}.
\end{equation}

 In order to compute the needed quantities we will use exponential generating function of $H_n(x).$ For all complex $t,r$ and $x,$ the following series expansion is valid \citep[8.957]{gradshteyn2014table}:
\begin{equation}\label{eqn:genFunc}
\exp\left(2rxt-t^2\right) = \sum_{n=0}^\infty \frac{t^n}{n!}H_n(rx) 
\end{equation}

\subsection{Fourier Transform Identity}
In order to compute the covariance matrix $\Kuf$ we will need to compute the Fourier transform of a Hermite function times $\exp(-\alpha x^2),$ for an arbitrary $\alpha>0.$ This computation essentially follows the same argument as the proof that Hermite functions are eigenfunctions of the Fourier transform, albeit with more bookkeeping. 
\begin{prop}\label{prop:hermiteft}
\begin{equation}
\mcF^{-1}[e^{-\alpha x^2} H_m(rx)] (\omega) =\sqrt{\frac{1}{2\alpha}} \exp\left(-\frac{1}{4\alpha}\omega^2\right)\left(i\sqrt{\frac{r^2-\alpha}{\alpha}}\right)^{n}H_n\left(\frac{r\omega}{\sqrt{4\alpha(r^2-\alpha)}}\right). 
\end{equation}

\end{prop}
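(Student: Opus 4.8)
The plan is to avoid treating each Hermite function separately and instead package all of them at once through the exponential generating function \cref{eqn:genFunc}. Throughout I take $m=n$ (the displayed statement has a harmless index clash between the left- and right-hand sides) and fix the convention for $\mcF^{-1}$ consistent with \cref{eqn:bochner}, under which the inverse transform of a centered Gaussian satisfies $\mcF^{-1}[e^{-\alpha x^2}](\omega)= (2\alpha)^{-1/2}e^{-\omega^2/(4\alpha)}$ for any $\alpha>0$. Multiplying \cref{eqn:genFunc} by $e^{-\alpha x^2}$ and applying $\mcF^{-1}$ term by term gives
\[
\sum_{n=0}^\infty \frac{t^n}{n!}\,\mcF^{-1}\!\left[e^{-\alpha x^2}H_n(rx)\right](\omega) = \mcF^{-1}\!\left[e^{-\alpha x^2}e^{2rxt - t^2}\right](\omega),
\]
so it suffices to evaluate the single transform on the right in closed form and then re-expand it in powers of $t$.

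Next I would evaluate that transform directly. The integrand $e^{-\alpha x^2 + 2rtx - t^2}$ is Gaussian in $x$; completing the square gives exponent $-\alpha(x - rt/\alpha)^2 + (r^2/\alpha - 1)t^2$, and the inverse transform of a Gaussian shifted by $rt/\alpha$ picks up a modulation factor $e^{i\omega rt/\alpha}$. Hence the right-hand side equals
\[
\frac{1}{\sqrt{2\alpha}}\,e^{-\omega^2/(4\alpha)}\exp\!\left(\frac{i r\omega}{\alpha}t + \frac{r^2-\alpha}{\alpha}t^2\right).
\]
It then remains to recognize the exponential in $t$ as a Hermite generating function. Writing $c = i\sqrt{(r^2-\alpha)/\alpha}$ so that $(r^2-\alpha)t^2/\alpha = -(ct)^2$, and substituting $t' = ct$, the exponent becomes $2ut' - t'^2$ with $u = ir\omega/(2c\alpha) = r\omega/\sqrt{4\alpha(r^2-\alpha)}$. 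Thus the exponential equals $\sum_n (t'^n/n!)H_n(u) = \sum_n (c^n t^n/n!)\,H_n(u)$; matching the coefficient of $t^n/n!$ against the left-hand sum, and noting $c^n = (i\sqrt{(r^2-\alpha)/\alpha})^n$ while $u$ is exactly the stated argument, yields the claimed identity.

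The main obstacles are analytic rather than algebraic. First, interchanging the infinite sum with $\mcF^{-1}$ must be justified: since $\alpha>0$, standard growth bounds on $H_n$ together with the Gaussian factor $e^{-\alpha x^2}$ dominate the partial sums by an integrable envelope (uniformly for $t$ in a bounded region), so dominated convergence applies and the resulting identity extends to all $t$ by analytic continuation. Second, completing the square involves a complex shift $rt/\alpha$, and the square roots defining $c$ and $u$ carry branch ambiguities. I would resolve these by first treating $t$ as a small real parameter, where the Gaussian integral and the contour shift are unambiguous, establishing the identity as an equality of analytic functions of $t$, and only then reading off coefficients; this pins down the branch of $\sqrt{r^2-\alpha}$ consistently and ensures the prefactor $(i\sqrt{(r^2-\alpha)/\alpha})^n$ records the correct power of $i$. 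I expect this bookkeeping of the complex prefactors to be the one place where genuine care is needed.
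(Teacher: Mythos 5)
Your proof is correct and follows essentially the same route as the paper's: multiply the generating function $\exp(2rxt-t^2)$ by $e^{-\alpha x^2}$, evaluate $\mcF^{-1}$ of the resulting Gaussian in closed form by completing the square, recognize the answer as a Hermite generating function in $t'=i\sqrt{(r^2-\alpha)/\alpha}\,t$ with argument $r\omega/\sqrt{4\alpha(r^2-\alpha)}$, and match coefficients of $t^n/n!$. Your additional care about interchanging the infinite sum with the transform, the branch of the complex square roots, and the $m$/$n$ index clash in the statement goes beyond the paper, which carries out the same algebra without these justifications (and indeed contains a harmless sign typo in its re-expansion step).
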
 

 \begin{proof}[Proof of Proposition]
 
 We begin with the generating function, \cref{eqn:genFunc}, multiplied by $e^{-\alpha x^2}$
 \begin{equation}\label{eqn:genfunceq}
\mcF^{-1}\left[\exp\left(-\alpha x^2+2rxt-t^2\right)\right] = \sum_{n=0}^\infty \frac{t^n}{n!}\mcF^{-1}\left[e^{-\alpha x^2}H_n(rx).\right]
 \end{equation}
The left hand side can be computed directly (via completing the square):
 \begin{align*}
\mcF^{-1}\left[\exp\left(-\alpha x^2+2rxt-t^2\right)\right](\omega) &= \frac{1}{\sqrt{2\pi}}\int_x  \exp\left(-\alpha x^2+2rxt-t^2\right)\exp(i\omega x) dx \\
&= \frac{1}{\sqrt{2\pi}}\exp(-t^2)\int_x  \exp\left(-\alpha x^2+(2rt+i\omega)x\right) dx \nonumber\\
&= \sqrt{\frac{1}{2\alpha}}\exp\left(-t^2 +  \frac{(2rt + i\omega)^2}{4\alpha}\right) \nonumber\\
&= \sqrt{\frac{1}{2\alpha}} \exp\left(-\frac{1}{4\alpha}\omega^2\right) \exp\left(\left(\frac{r^2-\alpha}{\alpha}\right)t^2 + \frac{ir\omega t}{\alpha} \right)\\
&= \sqrt{\frac{1}{2\alpha}} \exp\left(-\frac{1}{4\alpha}\omega^2\right) \exp\left(-t'^2 +\frac{r\omega t'}{\sqrt{\alpha(r^2-\alpha)}} \right).
\end{align*}
In the final line, we defined $t'= i\sqrt{\frac{r^2-\alpha}{\alpha}}t.$

Let $\omega'= \frac{\omega}{\sqrt{4\alpha(r^2-\alpha)}},$ then 
\begin{equation}\label{eqn:subomega}
    \mcF^{-1}\left[\exp\left(-\alpha x^2+2rxt-t^2\right)\right](\omega)= \sqrt{\frac{1}{2\alpha}} \exp\left(-\frac{1}{4\alpha}\omega^2\right) \exp\left(-t'^2 + 2r\omega't'\right). 
\end{equation}

\end{proof}
We now recall the left hand side of \cref{eqn:genfunceq} and rewrite the right hand side of \cref{eqn:subomega} using \cref{eqn:genFunc},
\begin{align}
\sum_{n=0}^\infty \frac{t^n}{n!}\mcF^{-1}\left[e^{-\alpha x^2}H_n(rx)\right]& = \sqrt{\frac{1}{2\alpha}} \exp\left(-\frac{1}{4\alpha}\omega^2\right) \sum_{n=0}^\infty \frac{t^{'n}}{n!}H_n(r\omega') \nonumber \\
&=\sqrt{\frac{1}{2\alpha}} \exp\left(-\frac{1}{4\alpha}\omega^2\right) \exp\left(-t'^2 - 2r\omega't'\right)\nonumber \\
 &= \sqrt{\frac{1}{2\alpha}} \exp\left(-\frac{\omega^2}{4\alpha}\right) \sum_{n=0}^\infty \frac{\left(i\sqrt{\frac{r^2-\alpha}{\alpha}}\right)^{n}t^n}{n!}H_n\left(\frac{r\omega}{\sqrt{4\alpha(r^2-\alpha)}}\right). \nonumber
\end{align}
By equating powers of $t:$
\[
\mcF^{-1}\left[e^{-\alpha x^2}H_n(rx)\right](\omega) = \sqrt{\frac{1}{2\alpha}} \exp\left(-\frac{1}{4\alpha}\omega^2\right)\left(i\sqrt{\frac{r^2-\alpha}{\alpha}}\right)^{n}H_n\left(\frac{r\omega}{\sqrt{4\alpha(r^2-\alpha)}}\right). 
\]
\subsection{Inducing Variables and Covariance}
 Given a SE-kernel with variance $v$ and lengthscale $\ell^2,$ i.e.
 \[
 k(x,x') = v\exp\left(-\frac{|x-x'|}{2\ell^2}\right).
 \]
 the corresponding spectral measure is:

\[
s(\omega)= v \ell \exp\left(\frac{ -\ell^2\omega^2}{2}\right).
\]

By \cref{eqn:hermiteortho}, $\{\psi_m\}_{m=1}^M=\left\{(-i)^{-m}H_m(rx) e^{-r^2x^2/2} \frac{2^{-m/2}\sqrt{r}}{\pi^{1/4} \sqrt{m!}}\right\}_{m=1}^M$ are orthonormal functions in $L^2(\mathbb{R})$ with Lebesgue measure (for $M=\infty$ they form a basis for $L^2(\mathbb{R})$).
Recall our inducing points are 
\begin{equation}
u_m := (2\pi)^{-1/4}\int \mcF^{-1}[\psi_m^{(r)}s^{-1/2}](x) f(x) dx.
\end{equation}
Then, 
\begin{align*}
    g_m(x)&= (2\pi)^{-1/4}\textbf{}\mcF^{-1}[\psi_m^{(r)}s^{-1/2}](x)\\ &= (2\pi)^{-1/4}\frac{2^{-m/2}\sqrt{r}}{\pi^{1/4}\sqrt{v \ell}  \sqrt{m!}}  \mcF^{-1}\left[H_m(r\omega) \exp\left(-\left(r^2-\frac{1}{2}\ell^2\right)\omega^2/2\right)\right](x). \nonumber
\end{align*}
Using \cref{prop:hermiteft} with $\alpha = \frac{1}{2}\left(r^2-\frac{1}{2}\ell^2\right),$ we have,
\begin{multline}
g_m(x) = (2\pi)^{-1/4}\frac{2^{-m/2}\sqrt{r}}{\pi^{1/4}\sqrt{v \ell}  \sqrt{m!}}  \sqrt{\frac{1}{\left(r^2-\frac{1}{2}\ell^2\right)}} \exp\left(-\frac{1}{2\left(r^2-\frac{1}{2}\ell^2\right)}x^2\right)\\ \times \left(\frac{r^2+\frac{1}{2}\ell^2}{r^2-\frac{1}{2}\ell^2}\right)^{m/2}H_m\left(\frac{rx}{\sqrt{\left(r^2-\frac{1}{2}\ell^2\right)\left(r^2+\frac{1}{2}\ell^2\right)}}\right). \label{eqn:hermitefeats}
\end{multline}
Combining \cref{eqn:covuf} and \cref{prop:hermiteft} with $\alpha = \frac{1}{2}\left(r^2+\frac{1}{2}\ell^2\right),$ we have,
\begin{multline*}
 \cov(u_{m},f(x)) = (2\pi)^{1/4}\frac{2^{-m/2}\sqrt{r\ell}}{\pi^{1/4}\sqrt{v}  \sqrt{m!}}  \sqrt{\frac{1}{\left(r^2+\frac{1}{2}\ell^2\right)}} \exp\left(-\frac{1}{2\left(r^2+\frac{1}{2}\ell^2\right)}x^2\right)\\ \times \left(\frac{r^2-\frac{1}{2}\ell^2}{r^2+\frac{1}{2}\ell^2}\right)^{m/2}H_m\left(\frac{rx}{\sqrt{\left(r^2-\frac{1}{2}\ell^2\right)\left(r^2+\frac{1}{2}\ell^2\right)}}\right). 
\end{multline*}
\subsection{Equivalence with Eigenfunction Features}
The eigenfunctions and eigenvalues of the SE-kernel with parameters $v, \ell^2$ defined with respect to input density $\mcN(0,\sigma^2)$ are given by:
\begin{align}
    \phi_m(x) = \exp(-(c-a)x^2)H_m(\sqrt{2c}x) \text{ \quad and \quad} \lambda_m= v\sqrt{2a/A}B^m 
\end{align}
with $a= 1/(4\sigma^2), b = 1/(2\ell^2), c = \sqrt{a^2+2ab}, A = a+b+c$ and $B=b/A.$
The corresponding eigenfunction inducing features, normalized so that $\Kuu= \bfI,$ are:
\begin{equation}
    u_m =\frac{1}{\sqrt{\lambda_m}} \int \phi_m(x)p(x)dx = \frac{1}{\sqrt{2\pi\sigma^2\lambda_m}}\int \exp(-(c-a)x^2)H_m(\sqrt{2c}x)\exp(-2ax^2) dx. \label{eqn:eigenfeats}
\end{equation}
Taking $\sigma^2 = \frac{4r^4-\ell^4}{4\ell^2}$ in the Hermite VOF with SE-Kernel, yields $a=\frac{\ell^2}{4r^4-\ell^4}, b =1/(2\ell^2),$ and $c=\frac{r^2}{2\left(r^2-\frac{1}{2}\ell^2\right)\left(r^2+\frac{1}{2}\ell^2\right)},$ and leads to \cref{eqn:hermitefeats} and \cref{eqn:eigenfeats} being equivalent.
\section{Proof for Proposition 2}\label{app:meanfield}
In the main text, we showed, 
\begin{align}
    N\bfS^{*}_N = N\left(\frac{1}{N}I+\frac{1}{N}(\Kuf\Kuf\transpose)_N\right)^{-1}= (\bfLam^{-1}+\mcE_{N})^{-1}
\end{align}
with $\mcE_N$ a matrix with entries that are are $o(1).$ Consider the matrix identity,
\[
(\bfLam^{-1}+\mcE_{N})^{-1} =\bfLam-\bfLam\mcE_{N}(\bfLam^{-1}+\mcE_{N})^{-1}.
\]
It suffices to show that all of the entries in
\[
\bfLam\mcE_{N}(\bfLam^{-1}+\mcE_{N})^{-1}
\]
tend to zero. The largest entry in any square matrix is bounded above by its largest operator norm. Recall that the operator norm is submultiplicative. As $\bfLam$ is a positive diagonal matrix, its operator norm is just the largest entry, equal to $\lambda_1$.

\[
\|\mcE_{N}\|_{op} \leq \|\mcE_{N}\|_{F} \leq M\max_{m\leq M}[\mcE_{N}]_{m,m},
\]
where $\| \cdot \|$ denotes the Frobenius norm, which is equal to the square root of the sum of the squared entries in a matrix. This tends to zero, as $M$ is fixed and all of the entries in $\mcE_{N}$ tend to zero as $N$ becomes large.

As $\bfLam^{-1}+\mcE_{N}$ is a symmetric positive definite matrix, the operator norm of $(\bfLam^{-1}+\mcE_{N})^{-1}$ is equal to its largest eigenvalue, which is the reciprocal of the absolute value of the smallest eigenvalue of $\bfLam^{-1}+\mcE_{N}$.
For any $\bfv \in \R^m,$
\[
\| (\bfLam+\mcE_{N})\bfv \|= \|\bfLam\bfv+\mcE_{N}\bfv\| \geq \lambda_{M}\|\bfv\| - \|\mcE_{N}\|_{op} \|\bfv\|
\]
where in the last line we used the reverse triangle inequality. We have already argued $\lim_{N\to \infty} \|\mcE_{N}\|_{op} =0$, so the largest eigenvalue of $(\bfLam+\mcE_{N})^{-1}$ tends to $\lambda_M^{-1}.$

 It follows that, $\|\bfLam \mcE_{N}(\bfLam^{-1}+\mcE_{N})^{-1}\|_{op}$ tends to $0,$ completing the proof of \Cref{prop:meanfield}.

\section{Convergence of Variational Orthogonal Features}\label{app:convergence}
In the case of regression, in order to show that the variational posterior converges to the true posterior as $M \to \infty,$ it suffices to show that $\text{tr}\left(\Kff-\Kuf\transpose\Kuu^{-1}\Kuf\right) \to 0$, where $\text{tr}(A)$ denotes the trace of $A$. Suppose that $\psi_m(\bfx)$ form a basis for $L^2(\mathbb{R}^d).$ As $k(\bfx,\bfx')$ is real, $s(\omega)$ is an even function, so we can write its Fourier transform as 
\[
\kappa(\bfx-\bfx') = (2\pi)^{-D/2}\int \cos(\omega \cdot (\bfx-\bfx'))s(\omega)d\omega.
\]
An arbitrary entry in $\Kff$ is given by,
  \begin{align*}
   k(\bfx,\bfx') &= (2\pi)^{-D/2}\int s(\xi) \cos(\xi\cdot(\bfx-\bfx'))d\xi \\
   &=(2\pi)^{-D/2} \int s(\xi) \cos(\xi\cdot\bfx)\cos(\xi\cdot \bfx')d\xi +(2\pi)^{-D/2}\int s(\xi) \sin(\xi\cdot \bfx)\sin(\xi\cdot \bfx')d\xi
  \end{align*}
  
We consider the first term, as the second can be handled in the same way. As we have chosen features that form a basis for $L^2(\R^D)$ and $\sqrt{s} \in L^2(\R^D)$
\[
 \sqrt{s(\omega)}\cos(\omega \cdot \bfx)= \sum_{m=1}^\infty a_{m,\bfx} \psi_m(\omega)
\]
  where $a_{m,\bfx} = \int \sqrt{s(\omega')}\cos(\omega \cdot \bfx) \psi_m(\omega') d \omega'$ (i.e. the projection of this function on to $\psi_m).$ 
 \begin{align*}
     \int \sqrt{s(\xi)} \cos(\xi\cdot\bfx)\sqrt{s(\xi)}\cos(\xi\cdot \bfx')d\xi & = (2\pi)^{-D/2}\sum_{m=1}^\infty\sum_{m'=1}^\infty a_{m,\bfx}a_{m',\bfx} \int \psi_m(\xi)\psi_{m'}(\xi)d\xi \\
     & = (2\pi)^{-D/2}\sum_{m=1}^\infty a_{m,\bfx}a_{m,\bfx'}\\
     & =(2\pi)^{-D/2}\sum_{m=1}^\infty \int \sqrt{s(\omega')}\cos(\omega' \cdot \bfx) \psi_m(\omega') d \omega' \\
     & \hspace{.5cm}\times \left(\int \sqrt{s(\omega)}\cos(\omega \cdot \bfx) \psi_m(\omega) d \omega\right).
 \end{align*}
 
Calculating directly,
\[
\Kuf\Kuf\transpose = (2\pi)^{-D/2}\sum_{m=1}^M \int \sqrt{s(\omega')}\exp(-i \omega \cdot \bfx) \psi_m(\omega') d \omega' \int \sqrt{s(\omega)}\exp(i \omega \cdot \bfx') \psi_m(\omega) d \omega.
\]
Assuming (for simplicity, though it is not essential to the argument) that all of the basis functions are purely even or odd functions, we see that the even basis functions converge to the integral involving cosines which we expanded above. The odd basis functions recover the integral involving sines.

The same argument shows that if our orthogonal functions are complete in some subspace of $S \subset L^2(\R^D),$ and zero outside, for example the Trig basis for $L^2([-a,a]),$ $\Kuf\Kuf\transpose$ will converge to kernel with Fourier transform $s(\omega)\mathbbm{1}_{S},$ where $\mathbbm{1}_{S}$ is the indicator function on $S$. 
\section{Experimental Details}\label{app:experiments}
\subsection{Implementation of sampling for Monte Carlo Estimation}
When sampling $\omega, \omega'$ for Trig VOF in order to perform \Cref{sec:elbosample}, we drew two independent sets of $K$ non-i.i.d uniform variables. In particular, we sampled a single random variable for each of the two sets, on $\omega_1,\omega'_1 \sim \mcU[0,1/T],$ and defined our samples $i\leq T$ as $\omega_i = (i-1)/T+\omega_1,\omega'_i = (i-1)/T+\omega'_1.$ This gives us to independent sets of variables, uniform on $[0,1]$ which were rescaled to $[-a,a]$ and used to obtain two estimators $\widehat{\mu_1}, \widehat{\mu_2}.$ We defined $\widehat{\mu}=(\widehat{\mu_1}+\widehat{\mu_2})/2$ and $\widehat{\mu^2}=(\widehat{\mu_1}\widehat{\mu_2}).$ We estimated the variance using the same samples, by taking an outer product of these samples (i.e. we used $2T$ samples in estimating the mean, but $T^2$ samples in estimating the variance). We found that placing samples on a grid led to a dramatic savings in sample efficiency and sharing samples between estimators allowed for several computations to be performed once instead of twice. 

\subsection{Investigation of Mean Field Inference (Figure 3)}
In order to show properties of the mean field approximation, 1000 training inputs were sampled according to either a $\mcN(0,3^2)$, $\mcU[-\sqrt{108},\sqrt{108}]$ or a mixture of two Gaussians one with variance 1 and the other with variance $0.5$, with the former having weight $0.7$ and the latter $0.3$. The means were set such that this distribution was mean centered and had standard deviation $3$.

The training outputs were generated by sampling a GP prior with zero mean and SE-kernel with variance and lengthscale $0.5$. Uncorrelated observation noise with standard deviation $.01$ was added to this sample. A standard Gaussian process regression model was fit using L-BFGS on the dataset in order to compute the full ML. 

Hermite VOF was parameterized in terms of the standard deviation of the input density associated to the corresponding eigenfunction features. The variational parameters, as well as kernel hyperparameters, were optimized with L-BFGS for $M \in \{11, 15, 21, 25, 31, 35, 41, 45, 51, 55, 61, 65, 71\}$.

The TrigVOF were trained using $100$ samples on the mean and $2500$ samples to estimate the variance. Training was performed with 30000 iterations of adam. The curves were made by averaging 5000 evaluations of the marginal likelihood, each computed over the full batch of 1000 point. This made the standard error of the estimate of the marginal likelihood negligible (generally $\leq .1$).

\subsection{One dimensional regression}
For the one dimensional regression example (Figure 2 in the main text) with TrigVOF, $N=80$ training inputs were drawn uniformly on $[-3,3].$ The training outputs were then sampled from a GP prior with Mat\'{e}rn 5/2 kernel lengthscale $0.2$ and variance $1$ and noise standard deviation $0.03.$ In order to show the impact of changing $a$ on the approximation of a given model the kernel and likelihoods were fixed for all models. In practice, if $a$ is fixed and the kernel is trainable, the model will favor overly smooth solutions, even if $a$ is fixed large. The full model was fit using standard GP regression. The trigonometric models used $M=31$ features, and were all trained using full batch stochastic variational inference to train $\bfm$ and $\bfS,$ with $\bfS$ diagonal (all hyperparameters were fixed).  Adam was run for 30000 iterations, with learning rate $.0005.$ $100$ samples were used to estimate the mean, and $2500$ samples were used to estimate the variance. The plot shows the mean function with $\pm$ 2 standard deviations shaded.

\end{document}